\newtheorem{theorem}{Theorem}
\theoremstyle{definition}
\newtheorem{definition}{Definition}
\theoremstyle{remark}
\newtheorem{remark}{Remark}
\title{Data Augmentation and Regularization for Learning Group Equivariance}
\author{\IEEEauthorblockN{Oskar Nordenfors, Axel Flinth}
\IEEEauthorblockA{Umeå University, Department of Mathematics and Mathematical Statistics}}
\date{January 2025}
\begin{document}

\maketitle
\begin{abstract}
In many machine learning tasks, known symmetries can be used as an inductive bias to improve model performance. In this paper, we consider learning group equivariance through training with data augmentation. We summarize results from a previous paper of our own, and extend the results to show that equivariance of the trained model can be achieved through training on augmented data in tandem with regularization.
\end{abstract}
\section{Introduction}
In certain machine learning tasks, symmetries of the data distribution are known a priori. For example, when estimating energy levels of a molecule, they should not change through a rotation. For such tasks, \emph{group equivariant models} have achieved state-of-the-art performance. A prominent example is the AlphaFold 2 model\cite{jumper2021highly}. Group equivariant models have been the focus of much research, and there are many ways to make models equivariant.

One line of research, starting with \cite{cohen2016group, cohen2017steer}, considers restricting the linear layers of neural networks to ensure that they are equivariant regardless of the values of the learned weights. That is, to achieve equivariance in the model, one makes the model equivariant by design, as opposed to, say, learning the equivariance. One benefit of equivariance-by-design is that there is a clear guarantee that the model will be exactly equivariant throughout training.

Another way to approach the problem is to consider learning equivariance from data. Intuitively, one way to do this is to use data augmentation. This entails supplementing the data through transformations of the inputs and outputs in order to symmetrize the training data distribution. Data augmentation was put into a group-theoretic context in \cite{chen2020group}, which has led to a great deal of research. Several works have since dealt with the question whether data augmentation (provably) induces equivariance. However, previous studies have been confined to simpler settings, such as linear models~\cite{lyle2020benefits,elesedy2021provably} as well as linear neural networks~\cite{lawrence2021implicit,chen2023implicit}, that is neural networks without activation functions. However, results for bona fide neural networks are scarce. Despite this, data augmentation is widely used and to great effect, for example in the recent AlphaFold 3 model \cite{abramson2024accurate}. This motivates searching for theoretical results also for 'real' neural networks.

In \cite{nordenfors2024optimizationdynamicsequivariantaugmented}, the authors of this paper developed a framework for studying the effects of data augmentation also for more general neural networks. The main results were that, under some geometrical conditions on the nominal neural network architecture, (i) the set of equivariant architectures $\mathcal{E}$ is an invariant subset for the dynamics of gradient descent with augmented data and (ii) the set of stationary points on $\mathcal{E}$ are the same for augmented and equivariant dynamics. $\mathcal{E}$ could however be unstable for the augmented dynamics. 

In this paper, we will present this framework and these results of \cite{nordenfors2024optimizationdynamicsequivariantaugmented}, and also study the effect of a simple regularization procedure. We will show that using this strategy in tandem with data augmentation will make the set of equivariant architectures $\mathcal{E}$ for attractor for the training dynamics. We will also provide a small numerical experiment confirming our results.

Although we will not go into this further in this article, let us also mention another recent development related to so-called \emph{ensembles}. In that setting, it is possible to show  group equivariance through data augmentation \cite{gerken2024emergentequivariancedeepensembles, maass2024symmetriesoverparametrizedneuralnetworks, nordenfors2024ensemblesprovablylearnequivariance} in a global sense. These results rely heavily on taking averages over many individually trained networks, which can be expensive. This paper treats instead individual networks.




\section{Background}
We will here present the framework and main results of \cite{nordenfors2024optimizationdynamicsequivariantaugmented}. Let us begin by defining what we mean by group representation and group equivariance. These are standard definitions and can be found in, for example, \cite[p. 3]{fulton2004representation}.
\begin{definition}
    A \textit{representation} (\textit{rep}) of a group $G$ on a vector space $V$ is a group homomorphism $\rho:G\to\mathrm{Aut}(V)$. That is, a map that associates each group element with an invertible matrix, in a way that respects the group structure.
\end{definition}
If $G$ is compact, the reps can be assumed to be unitary w.l.o.g. In the sequel, we will assume that $G$ is compact. There is always a \textit{trivial rep} defined by $\rho(g)=\mathrm{id}$ for every $g\in G$.
\begin{definition}
    Given a group $G$ and vector spaces $U$ and $V$ with reps $\rho_U$ and $\rho_V$ respectively, a map $f:U\to V$ is called \textit{equivariant} if
    \begin{equation}\label{eq:equivariant}
        f(\rho_U(g)u)=\rho_V(g)f(u), \quad 
        \quad g\in G\, \mathrm{and}\, u\in U,
    \end{equation}
\end{definition}
If in \eqref{eq:equivariant} $\rho_V$ is the trivial rep, then $f$ is called \textit{invariant}. Given reps $\rho_U$ and $\rho_V$ on $U$ and $V$, respectively, we define a rep on the space of linear maps $U\to V$ by $\overline{\rho}(g)A=\rho_V(g)^{-1}A\rho_U(g)$. The subspace of equivariant linear maps can then be identified as $\mathrm{Hom}_G(U,V)\coloneqq\{A\in\mathrm{Hom}(U,V):\overline{\rho}(g)A=A\, \,\forall g\in G\}$. 


\subsection{A neural network framework} Let $X_0,\ldots, X_L$ be the input, intermediate, and output spaces of the network, respectively, and $\sigma_i:X_{i+1}\to X_{i+1}$, $i \in [L]$ be its non-linearities. Denoting the learnable linear layers of the network $A_i: X_i \to X_{i+1}$, we obtain a neural network by recursively defining $x_0=x$, $x_{i+1}=\sigma_{i+1}(A_ix_i)$, and finally $\Phi(x)=x_L$. Given reps $\rho_0$ and $\rho_{L}$ of the group $G$ on the input and output spaces, respectively, it is straightforward to see that if we specify a rep $\rho_i$ of $G$ on all intermediate spaces and choose both the non-linearities $\sigma_i$ and the linearities $A_i$ are chosen equivariant, $\Psi_A$ will also be. This is essentially the canonical way to construct manifestly equivariant networks\cite[p. 27]{bronstein2021geometric}. Here we refer to the network being trained in \emph{equivariant} mode. We will compare this to the corresponding \emph{nominal} architectures, that is, networks with the same non-linearities, but linear layers not necessarily restricted to the space $\mathcal{H}_G$ of (tuples of) equivariant linear maps.

While the above construction is general, it only entails fully-connected networks without bias. In \cite{nordenfors2024optimizationdynamicsequivariantaugmented}, we showed that a vast range of architectures can painlessly be incorporated into the framework by a priori restraining the linear layers $A_i$ to lie in an affine subspace $\mathcal{L}$. As a simple example, we obtain CNN:s by letting $\mathcal{L}$ be the subspace of convolution operators. We put these into equivariant mode via  constraining the layers to lie in the affine subspace $\mathcal{E}=\mathcal{H}_G \cap \mathcal{L}$.


\subsection{Training dynamics}  Let $\ell:X_L\times X_L\to \mathbb{R}$ be a loss function. Using training data and labels $(x,y)$ distributed according to a distribution $\mathcal{D}$, we define the \emph{nominal risk} $R(A)\coloneqq\mathbb{E}_{\mathcal{D}}[\ell(\Phi_A(x_0),x_L)]$. Data augmentation is performed by applying symmetry transformations $\rho_0(g), \rho_L(g)$, drawn according to a measure $\mu$. This transforms the risk function into $$R^{\mathrm{aug}}(A)\coloneqq\mathbb{E}_\mu[\mathbb{E}_{\mathcal{D}}[\ell(\Phi_A(\rho_0(g)x_0),\rho_L(g)x_L)]],$$ that we will refer to as the \emph{augmented risk}. We will always assume that $\mu$ is the so-called \emph{Haar measure} of the group, which has the property that group translations $h \mapsto gh$ are measure preserving. This is a probability measure, since we have assumed that $G$ is compact. If $G$ is finite, the Haar measure is simply the uniform measure.

Now, let us consider three approaches for training our network. Firstly, we can train our network on non-augmented data via applying gradient flow with the nominal risk. To confine the flow to the subspace $\mathcal{L}$, this gradient flow needs to be projected: Letting $\Pi_{\mathcal{L}}$ denote the orthogonal projection onto $T\mathcal{L}$ (the tangent space of the linear manifold $\mathcal{L}$), we hence consider $\Dot{A}=-\Pi_{\mathcal{L}}\nabla R(A)$. In the same manner, training on augmented data amounts to the dynamics $\Dot{A}=-\Pi_{\mathcal{L}}\nabla R^{\mathrm{aug}}(A)$. Lastly, we can train our network in equivariant mode via applying gradient flow projected to $\mathcal{E}$, that is $\Dot{A}=-\Pi_{\mathcal{E}}\nabla R(A)$, with $\Pi_{\mathcal{E}}$ the orthogonal projection onto $\mathrm{T}\mathcal{E}$. These projections do not need to be applied explicitly -- as we discuss in \cite{nordenfors2024optimizationdynamicsequivariantaugmented}, this behavior emerges simply by orthogonal parametrizations of the layers and applying gradient descent to the coefficients of those parametrizations \cite[Sec. 3]{nordenfors2024optimizationdynamicsequivariantaugmented}. 

\subsection{The 'local equivalence' of augmenting and restricting.}
The analysis we performed in \cite{nordenfors2024optimizationdynamicsequivariantaugmented} revolves around the augmented and restricted dynamics on $\mathcal{E}$. In essence, we proved three results. To arrive at them, we must assume that

(a) The loss function $\ell$ is invariant: $$\ell(\rho_{L}(g)x,\rho_{L}(g)x')=\ell(x,x'), g\in G,x,x' \in X_L.$$ 

(b) The nominal architecture satisfies the \emph{compatibility condition}, that is \emph{the orthogonal projections $\Pi_{\mathcal{L}}$ onto $T\mathcal{L}$ and $\Pi_G$ onto $\mathcal{H}_G$ should commute}.
$$\Pi_{\mathcal{L}}\Pi_G = \Pi_G \Pi_{\mathcal{L}}.$$
This is satisfied if all symmetry transformations $\rho(g)$ leave $\mathcal{L}$ invariant. This shows that the results are applicable to, for example, convolutional neural networks and rotations in image space, as long as the supports of the convolutions are rotationally symmetric.

Under these assumptions, we have the following.
\begin{theorem}
1.  $\mathcal{E}$ is an invariant set of the augmented dynamics. 

2. The set of points $S^{\mathrm{eq}}$ and $S^{\mathrm{aug}}$ \emph{in $\mathcal{E}$} that are stationary for the equivariant and augmented dynamics, respectively, agree.

\end{theorem}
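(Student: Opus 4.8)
The plan is to reduce both claims to a single identity relating the augmented gradient on $\mathcal{E}$ to the nominal gradient. First I would establish the conjugation relation $\Phi_{\overline{\rho}(g)A}(x)=\rho_L(g)^{-1}\Phi_A(\rho_0(g)x)$, which holds because the non-linearities $\sigma_i$ are equivariant. This follows by induction over the layers: writing the conjugated weights as $\rho_{i+1}(g)^{-1}A_i\rho_i(g)$ and pushing each factor $\rho_i(g)^{-1}$ through the equivariant $\sigma_i$, one checks that feeding $x$ through the conjugated network produces the same activation as feeding $\rho_0(g)x$ through $A$, carrying a factor $\rho_i(g)^{-1}$ at layer $i$. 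Combining this with the loss invariance (a) yields $\ell(\Phi_A(\rho_0(g)x_0),\rho_L(g)x_L)=\ell(\Phi_{\overline{\rho}(g)A}(x_0),x_L)$, so the augmented risk collapses to a group average of the nominal risk, $R^{\mathrm{aug}}(A)=\mathbb{E}_\mu[R(\overline{\rho}(g)A)]$.

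Next I would differentiate this identity. Since $G$ is compact and its reps unitary, $\overline{\rho}(g)$ is orthogonal with respect to the Frobenius inner product on weight-tuples, so the chain rule (interchanging $\nabla$ with the Haar integral) gives $\nabla R^{\mathrm{aug}}(A)=\mathbb{E}_\mu[\overline{\rho}(g)^{-1}\nabla R(\overline{\rho}(g)A)]$. The crucial specialization is to $A\in\mathcal{E}\subseteq\mathcal{H}_G$: there $\overline{\rho}(g)A=A$ for every $g$, so the integrand becomes $\overline{\rho}(g)^{-1}\nabla R(A)$ and the average factors as $\left(\mathbb{E}_\mu[\overline{\rho}(g)^{-1}]\right)\nabla R(A)$. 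By the standard fact that averaging a unitary representation against Haar measure yields the orthogonal projection onto its invariant subspace, $\mathbb{E}_\mu[\overline{\rho}(g)^{-1}]=\Pi_G$, whence $\nabla R^{\mathrm{aug}}(A)=\Pi_G\nabla R(A)$ on $\mathcal{E}$. This one identity drives everything.

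For claim 1, the augmented velocity at $A\in\mathcal{E}$ is $-\Pi_{\mathcal{L}}\nabla R^{\mathrm{aug}}(A)=-\Pi_{\mathcal{L}}\Pi_G\nabla R(A)$, which by the compatibility condition (b) equals $-\Pi_G\Pi_{\mathcal{L}}\nabla R(A)$; this vector lies in both $\mathrm{im}\,\Pi_G=\mathcal{H}_G$ and $\mathrm{im}\,\Pi_{\mathcal{L}}=T\mathcal{L}$, hence in $T\mathcal{E}=\mathcal{H}_G\cap T\mathcal{L}$. As the augmented vector field is therefore tangent to the affine subspace $\mathcal{E}$ at every point of $\mathcal{E}$, uniqueness of solutions keeps any trajectory started in $\mathcal{E}$ inside $\mathcal{E}$. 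For claim 2, I would invoke that two commuting orthogonal projections multiply to the orthogonal projection onto the intersection of their ranges, so (b) gives $\Pi_{\mathcal{L}}\Pi_G=\Pi_{\mathcal{E}}$. Combined with the identity above, $\Pi_{\mathcal{L}}\nabla R^{\mathrm{aug}}(A)=\Pi_{\mathcal{E}}\nabla R(A)$ for every $A\in\mathcal{E}$, so the augmented and equivariant stationarity conditions coincide pointwise on $\mathcal{E}$, giving $S^{\mathrm{aug}}=S^{\mathrm{eq}}$.

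I expect the main obstacle to be the gradient computation rather than the projection algebra: one must carefully justify differentiating under the Haar integral, verify that $\overline{\rho}(g)$ is orthogonal on the weight space so that $\nabla(R\circ\overline{\rho}(g))=\overline{\rho}(g)^{-1}(\nabla R)\circ\overline{\rho}(g)$, and correctly identify the Haar average of the representation with $\Pi_G$. Once $\nabla R^{\mathrm{aug}}=\Pi_G\nabla R$ on $\mathcal{E}$ is secured, both parts are short consequences of the compatibility assumption, which is doing all the geometric work of aligning the two projections.
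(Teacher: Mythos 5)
Your proposal is correct and follows essentially the same route as the paper: both claims are reduced to the single identity $\Pi_{\mathcal{L}}\nabla R^{\mathrm{aug}}(A)=\Pi_{\mathcal{E}}\nabla R(A)$ for $A\in\mathcal{E}$ (the paper's Fact A), with the compatibility condition $\Pi_{\mathcal{L}}\Pi_G=\Pi_G\Pi_{\mathcal{L}}=\Pi_{\mathcal{E}}$ doing the projection algebra and uniqueness of ODE solutions giving invariance. The only difference is that the paper imports Fact A from \cite{nordenfors2024optimizationdynamicsequivariantaugmented} without proof, whereas you re-derive it via the orbit-averaging identity $R^{\mathrm{aug}}(A)=\mathbb{E}_\mu[R(\overline{\rho}(g)A)]$, unitarity of $\overline{\rho}(g)$, and the fact that the Haar average of a unitary rep is the orthogonal projection onto its fixed subspace --- which is precisely the argument of the cited companion paper, so your write-up is a correct, self-contained version of the same proof.
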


Both of these results follow from the (non-trivial) fact, which we will use in the following:

\noindent
\textbf{Fact A}  $\Pi_{\mathcal{L}}\nabla R^{\mathrm{aug}}(A)= \Pi_\mathcal{E}\nabla R(A)$ for $A\in \mathcal{E}$. 

Note that the above even shows that when restricted to $\mathcal{E}$, the dynamics of the augmented and equivariant training are exactly the same. However, the result only concerns stationarity, which is, of course, different from stability. \cite{nordenfors2024optimizationdynamicsequivariantaugmented} contains a weak result about stability.

\begin{theorem}
    Points on $\mathcal{E}$ that are stable under the augmented dynamics are also stable under the equivariant dynamics, but not necessarily  the other way around.
\end{theorem}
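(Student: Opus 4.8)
The plan is to reduce the statement to a single structural observation: on $\mathcal{E}$ the augmented flow and the equivariant flow are literally the same curve, so Lyapunov stability with respect to the (larger) augmented dynamics automatically restricts to Lyapunov stability with respect to the equivariant dynamics. Throughout I take ``stable'' to mean Lyapunov stability of a stationary point $A^\ast \in \mathcal{E}$: for every $\varepsilon>0$ there is a $\delta>0$ so that $\|A(0)-A^\ast\|<\delta$ forces $\|A(t)-A^\ast\|<\varepsilon$ for all $t\ge 0$, where $A(\cdot)$ solves the relevant gradient flow. Since $A^\ast$ is stationary for both flows by part 2 of Theorem 1, this comparison is well posed.

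First I would show that augmented trajectories initialized in $\mathcal{E}$ coincide with equivariant trajectories. By part 1 of Theorem 1, $\mathcal{E}$ is invariant under the augmented dynamics, so a solution $A(\cdot)$ of $\dot A = -\Pi_{\mathcal{L}}\nabla R^{\mathrm{aug}}(A)$ with $A(0)\in\mathcal{E}$ remains in $\mathcal{E}$ for all $t$. Along this curve Fact A gives $\Pi_{\mathcal{L}}\nabla R^{\mathrm{aug}}(A(t)) = \Pi_{\mathcal{E}}\nabla R(A(t))$, so $A(\cdot)$ also solves the equivariant flow $\dot A = -\Pi_{\mathcal{E}}\nabla R(A)$. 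Assuming the risks are $C^1$ with locally Lipschitz gradients, so that Picard--Lindel\"of applies and solutions are unique, the converse inclusion holds as well, and the two flows agree as maps on $\mathcal{E}$.

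Given this, the forward implication is immediate. Fix $\varepsilon>0$ and let $\delta>0$ be the radius furnished by augmented stability of $A^\ast$. For any equivariant initialization $A(0)\in\mathcal{E}$ with $\|A(0)-A^\ast\|<\delta$, the equivariant trajectory equals the augmented trajectory by the previous step, and augmented stability bounds it inside the $\varepsilon$-ball for all time; hence $A^\ast$ is equivariant-stable. The point is that augmented stability quantifies over all perturbations in $\mathcal{L}$, in particular those transverse to $\mathcal{E}$, whereas equivariant stability only sees perturbations within $\mathcal{E}$, so the former is the strictly stronger requirement.

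To see that the converse fails I would exhibit a stationary $A^\ast\in\mathcal{E}$ that is equivariant-stable but augmented-unstable, i.e.\ one from which a transverse perturbation grows. The natural tool is linearization: the compatibility condition $\Pi_{\mathcal{L}}\Pi_G=\Pi_G\Pi_{\mathcal{L}}$ splits $T\mathcal{L}$ orthogonally into the tangent part $T\mathcal{E}$ and a transverse part, and I would check that the Jacobian of the augmented flow at $A^\ast$ respects this splitting, so that equivariant stability is governed by the tangential block and augmented stability by both. A minimal example, e.g.\ a shallow network with a single nonlinearity chosen so that the linearized augmented dynamics has a transverse direction with a positive eigenvalue, then gives a point stable on $\mathcal{E}$ yet repelled off it. The main obstacle I anticipate is verifying that the Jacobian is block-diagonal with respect to the compatibility splitting rather than merely block-triangular, since a nonzero off-diagonal coupling would force a more delicate center-manifold argument in place of the clean eigenvalue comparison.
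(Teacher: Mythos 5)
Your forward implication is sound and is exactly the route the paper has in mind: by Theorem~1 the set $\mathcal{E}$ is invariant for the augmented flow, Fact~A makes the two vector fields coincide there, and (granting uniqueness of solutions) every equivariant trajectory is an augmented trajectory, so stability against the larger class of perturbations in $\mathcal{L}$ restricts to stability against perturbations inside $\mathcal{E}$. That half needs no further work.

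The gap is in the second half of the statement. ``Not necessarily the other way around'' is an existence claim, and you never discharge it: you describe a strategy (linearize, find a transverse direction with positive eigenvalue) but do not exhibit any concrete network, loss, group and stationary point for which the transverse block of the linearization actually has a positive eigenvalue while the point remains stable within $\mathcal{E}$. As written, the proposal proves only the first clause of the theorem. Moreover, the structural obstacle you flag as the ``main'' difficulty is not actually an obstacle, and the paper hands you the tool to dispose of it: Fact~B states that $\Pi_{\mathcal{L}}(R^{\mathrm{aug}})''(A)$ maps $T\mathcal{E}^\perp$ into $T\mathcal{E}^\perp$ for $A\in\mathcal{E}$, and since $\Pi_{\mathcal{L}}(R^{\mathrm{aug}})''(A)\Pi_{\mathcal{L}}$ is symmetric, invariance of $T\mathcal{E}^\perp$ forces invariance of $T\mathcal{E}$ as well, so the Jacobian is genuinely block-diagonal, not merely block-triangular; no center-manifold argument is needed. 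What remains irreducible, and missing, is the counterexample itself: a situation where the tangential block is (say) positive definite but the transverse block has a negative eigenvalue of the Hessian, so the point is a local minimum along $\mathcal{E}$ yet a saddle for the augmented flow. Until such an instance is constructed (the cited prior work does this; intuitively, augmentation symmetrizes the risk but cannot create curvature in the transverse directions that the underlying problem does not have), the theorem's second clause is asserted rather than proved.
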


In particular, $\mathcal{E}$ is not guaranteed to be an attractor for the augmented dynamics. In the next section, we show that we can achieve that through regularization.


\section{On the Effects of Regularization}
A standard way of regularizing neural network training is weight decay, which refers to adding a regularization term proportional to the squared norm of the weights, $\sfrac{\gamma}{2}\lVert A \rVert^2$ to the risk. This prevents the weights from getting to large. Here, we are interested in the \emph{non-equivariant} parts of the weight getting too large, which suggests using a term of the form $\sfrac{\gamma}{2}\lVert \Pi_{\mathcal{E}^\perp}A\rVert^2$.

It should be noted that it is not hard to calculate the projection onto $\mathcal{E}^\perp$ explicitly. For most important symmetry groups and architectures, orthogonal bases of $\mathcal{E}$ are known, see for example \cite{maron2018invariant, cohen2018generalGCNN}. There are even methods to calculate them numerically \cite{finzi2021practical}.

Let us now give a formal proof that given a strong enough regularization, $\mathcal{E}$ will become an attractor. We will make the following simplifying assumptions.
\begin{enumerate}[(i)]
    \item The risk is bounded below by zero.
    \item The second derivative of the augmented risk is bounded below in the following sense: There exists a (\emph{not necessarily positive}) constant $\sigma$ such that 
    \begin{align*}
        \langle{(R^{\mathrm{aug}})''(A)Y,Y\rangle} \geq \sigma \Vert Y \Vert^2 , \quad A \in \mathcal{E}, Y \in T\mathcal{E}^\perp.
    \end{align*}
    \item The third derivative of the augmented risk is uniformly bounded.
\end{enumerate}

Before we come to the main result, let us present another fact derived in \cite[proof of Prop. 3.14]{nordenfors2024optimizationdynamicsequivariantaugmented} that will be used in the proof.

\noindent
\textbf{Fact B} If $Y \in T\mathcal{E}^\perp$ and $A\in \mathcal{E}$, $\Pi_{\mathcal{L}} (R^{\mathrm{aug}})''(A)Y \in T\mathcal{E}^\perp$.
Let us also introduce the notation $\Pi_{\mathcal{E}^\perp}$ for the orthogonal projection onto $\mathrm{T}\mathcal{E}^\perp$.

We can now prove the main result, that the equivariant subspace $\mathcal{E}$ becomes an attractor for training with data augmentation by penalizing non-equivariance through a regularization term in the augmented loss.
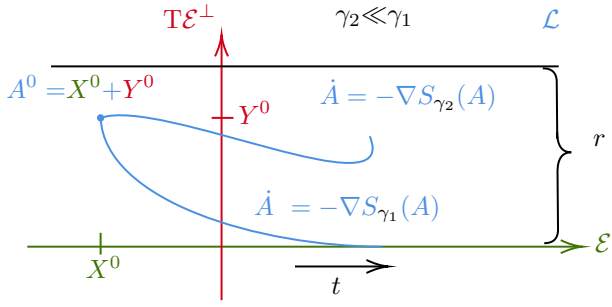
\begin{figure}
    \centering

\tikzset{every picture/.style={line width=0.75pt}} 

\begin{tikzpicture}[x=0.75pt,y=0.75pt,yscale=-1,xscale=1]

\draw [color={rgb, 255:red, 65; green, 117; blue, 5 }  ,draw opacity=1 ]   (190,155) -- (468,155) ;
\draw [shift={(470,155)}, rotate = 180] [color={rgb, 255:red, 65; green, 117; blue, 5 }  ,draw opacity=1 ][line width=0.75]    (10.93,-3.29) .. controls (6.95,-1.4) and (3.31,-0.3) .. (0,0) .. controls (3.31,0.3) and (6.95,1.4) .. (10.93,3.29)   ;
\draw [color={rgb, 255:red, 208; green, 2; blue, 27 }  ,draw opacity=1 ]   (288,182) -- (288,50) ;
\draw [shift={(288,48)}, rotate = 90] [color={rgb, 255:red, 208; green, 2; blue, 27 }  ,draw opacity=1 ][line width=0.75]    (10.93,-3.29) .. controls (6.95,-1.4) and (3.31,-0.3) .. (0,0) .. controls (3.31,0.3) and (6.95,1.4) .. (10.93,3.29)   ;
\draw [color={rgb, 255:red, 74; green, 144; blue, 226 }  ,draw opacity=1 ]   (227,90) .. controls (229,132) and (307,155) .. (369,155) ;
\draw    (202,64) -- (458,64) ;
\draw   (450.75,153.25) .. controls (455.42,153.21) and (457.73,150.86) .. (457.69,146.19) -- (457.44,117.42) .. controls (457.39,110.75) and (459.69,107.4) .. (464.36,107.36) .. controls (459.69,107.4) and (457.33,104.09) .. (457.27,97.42)(457.3,100.42) -- (457.05,71.94) .. controls (457.02,67.27) and (454.67,64.96) .. (450,65) ;
\draw [color={rgb, 255:red, 208; green, 2; blue, 27 }  ,draw opacity=1 ]   (283,90) -- (294,90) ;
\draw [color={rgb, 255:red, 65; green, 117; blue, 5 }  ,draw opacity=1 ]   (227,148) -- (227,159) ;
\draw [color={rgb, 255:red, 0; green, 0; blue, 0 }  ,draw opacity=1 ]   (325,165) -- (359.75,165) -- (370,165) ;
\draw [shift={(372,165)}, rotate = 180] [color={rgb, 255:red, 0; green, 0; blue, 0 }  ,draw opacity=1 ][line width=0.75]    (10.93,-3.29) .. controls (6.95,-1.4) and (3.31,-0.3) .. (0,0) .. controls (3.31,0.3) and (6.95,1.4) .. (10.93,3.29)   ;
\draw [color={rgb, 255:red, 74; green, 144; blue, 226 }  ,draw opacity=1 ]   (227,90) .. controls (257.75,79.25) and (378.75,138.25) .. (362.75,99.25) ;

\draw [color={rgb, 255:red, 74; green, 144; blue, 226 }] [fill={rgb, 255:red, 74; green, 144; blue, 226 }] (227,90) circle[radius= 0.1 em];

\draw (475,147) node [anchor=north west][inner sep=0.75pt]  [color={rgb, 255:red, 65; green, 117; blue, 5 }  ,opacity=1 ] [align=left] {$\displaystyle \mathcal{E}$};
\draw (257,31) node [anchor=north west][inner sep=0.75pt]  [color={rgb, 255:red, 208; green, 2; blue, 27 }  ,opacity=1 ] [align=left] {$\displaystyle \mathrm{T}\mathcal{E}^{\perp }$};
\draw (448,34) node [anchor=north west][inner sep=0.75pt]  [color={rgb, 255:red, 74; green, 144; blue, 226 }  ,opacity=1 ] [align=left] {$\displaystyle \mathcal{L}$};
\draw (474,97) node [anchor=north west][inner sep=0.75pt]   [align=left] {$\displaystyle r$};
\draw (295,80) node [anchor=north west][inner sep=0.75pt]  [color={rgb, 255:red, 208; green, 2; blue, 27 }  ,opacity=1 ] [align=left] {$\displaystyle Y^{0}$};
\draw (218,158) node [anchor=north west][inner sep=0.75pt]   [align=left] {$\displaystyle \textcolor[rgb]{0.25,0.46,0.02}{X^{0}}$};
\draw (303,125) node [anchor=north west][inner sep=0.75pt]   [align=left] {$\displaystyle \textcolor[rgb]{0.29,0.56,0.89}{\dot{A\ } =-\nabla S_{\textcolor[rgb]{0,0,0}{\gamma _{1}}}( A}\textcolor[rgb]{0.29,0.56,0.89}{)}$};
\draw (342,169) node [anchor=north west][inner sep=0.75pt]   [align=left] {$\displaystyle t$};
\draw (178,67) node [anchor=north west][inner sep=0.75pt]   [align=left] {$\displaystyle \textcolor[rgb]{0.29,0.56,0.89}{A^{0} =}\textcolor[rgb]{0.25,0.46,0.02}{X^{0}}\textcolor[rgb]{0.29,0.56,0.89}{+}\textcolor[rgb]{0.82,0.01,0.11}{Y^{0}}$};
\draw (336,69) node [anchor=north west][inner sep=0.75pt]   [align=left] {$\displaystyle \textcolor[rgb]{0.29,0.56,0.89}{\dot{A} =-\nabla S_{\textcolor[rgb]{0,0,0}{\gamma _{2}}}( A)}$};
\draw (344,33) node [anchor=north west][inner sep=0.75pt]   [align=left] {$\displaystyle  \gamma _{2}$$\displaystyle \ll $$\displaystyle \gamma _{1}$};

\end{tikzpicture}

    \caption{Regardless of how far from $\mathcal{E}$ we start training we can select the regularization parameter $\gamma$ large enough that the dynamics $\Dot{A}=-\nabla S_{\gamma}(A)$ converge to $\mathcal{E}$ exponentially fast. If a too low value of $\gamma$ is chosen, the dynamics might not converge to $\mathcal{E}$ at all.}
    \label{fig:attractor}
\end{figure}

\begin{theorem}[Equivariant subspace is attractor of regularized augmented gradient flow]\label{thm:attractor}
    Suppose that $R^{\mathrm{aug}}(A)\geq 0$ for every $A\in \mathcal{L}$ and let $S_{\gamma}(A)=S(A)\coloneqq R^{\mathrm{aug}}(A)+\frac{\gamma}{2}\lVert \Pi_{\mathcal{E}^{\perp}}A \rVert^2$ denote the regularized loss. Assume that $\ell$ is invariant, that the compatibility condition $\Pi_{\mathcal{L}}\Pi_G = \Pi_G \Pi_{\mathcal{L}}$ holds, and (i)-(iii)  above. Then, for any $r>0$, we can choose $\gamma>0$  large enough so that any curve started at a distance to $\mathcal{E}$ smaller than $r$ following the dynamics
    $$
    \Dot{A}=-\nabla S(A) = -\Pi_{\mathcal{L}}\nabla R^{\mathrm{aug}}(A) -\gamma\Pi_{\mathcal{E}^{\perp}}A
    $$
    will converge to $\mathcal{E}$ exponentially fast. 
\end{theorem}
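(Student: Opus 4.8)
\section{Proof proposal}

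The plan is to monitor the single scalar quantity $\tfrac12\lVert Y\rVert^2$, where $Y=\Pi_{\mathcal{E}^\perp}A$ is the non-equivariant part of the weights, and to show it decays exponentially. Since the compatibility condition makes $\Pi_{\mathcal L}$ and $\Pi_G$ commute, their product $\Pi_{\mathcal{E}}=\Pi_{\mathcal L}\Pi_G$ is the orthogonal projection onto $\mathrm{T}\mathcal{E}=\mathrm{T}\mathcal L\cap\mathcal{H}_G$; in particular $\Pi_{\mathcal{E}^\perp}A\in\mathrm{T}\mathcal{L}$ whenever $A\in\mathcal L$, so the flow stays in $\mathcal L$ and $\lVert Y\rVert$ is exactly the distance from $A$ to $\mathcal E$. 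Writing $A=X+Y$ with foot point $X=\Pi_{\mathcal E}A\in\mathcal E$ and $Y\in \mathrm{T}\mathcal L\cap \mathrm{T}\mathcal E^\perp$, the first step is to project the dynamics: applying $\Pi_{\mathcal{E}^\perp}$ to $\dot A$ gives
$$\dot Y = -\Pi_{\mathcal{E}^\perp}\Pi_{\mathcal L}\nabla R^{\mathrm{aug}}(A)-\gamma Y,$$
since $\Pi_{\mathcal{E}^\perp}$ is idempotent and $\Pi_{\mathcal{E}^\perp}\Pi_{\mathcal{E}^\perp}A=Y$.

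The heart of the argument is to Taylor expand the first term around $X\in\mathcal E$. By \textbf{Fact A}, $\Pi_{\mathcal L}\nabla R^{\mathrm{aug}}(X)=\Pi_{\mathcal E}\nabla R(X)\in \mathrm{T}\mathcal E$, so its image under $\Pi_{\mathcal{E}^\perp}$ vanishes and the zeroth-order term drops out. The first-order term is $\Pi_{\mathcal{E}^\perp}\Pi_{\mathcal L}(R^{\mathrm{aug}})''(X)Y$, which by \textbf{Fact B} already lies in $\mathrm{T}\mathcal E^\perp$ and hence equals $\Pi_{\mathcal L}(R^{\mathrm{aug}})''(X)Y$. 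Pairing $\dot Y$ with $Y$, using self-adjointness of $\Pi_{\mathcal L}$ and $\Pi_{\mathcal L}Y=Y$ to rewrite $\langle Y,\Pi_{\mathcal L}(R^{\mathrm{aug}})''(X)Y\rangle=\langle Y,(R^{\mathrm{aug}})''(X)Y\rangle$, assumption (ii) supplies the lower bound $\sigma\lVert Y\rVert^2$, while assumption (iii) bounds the Taylor remainder by $C\lVert Y\rVert^3$ for a fixed constant $C$ depending only on the uniform third-derivative bound. Collecting terms yields the differential inequality
$$\tfrac{d}{dt}\tfrac12\lVert Y\rVert^2 \le -(\sigma+\gamma)\lVert Y\rVert^2 + C\lVert Y\rVert^3 = \lVert Y\rVert^2\bigl(-(\sigma+\gamma)+C\lVert Y\rVert\bigr).$$

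To finish, I choose $\gamma$ large. Given $r>0$, pick $\gamma$ with $\sigma+\gamma>0$ and $Cr\le\tfrac12(\sigma+\gamma)$, i.e. $\gamma\ge 2Cr-\sigma$. Then on the tube $\lVert Y\rVert\le r$ the bracket is at most $-\tfrac12(\sigma+\gamma)$, so $\tfrac{d}{dt}\lVert Y\rVert^2\le -(\sigma+\gamma)\lVert Y\rVert^2$, and Grönwall gives $\lVert Y(t)\rVert\le \lVert Y(0)\rVert\,e^{-\frac{\sigma+\gamma}{2}t}$, the claimed exponential convergence. Since the derivative is strictly negative on the boundary $\lVert Y\rVert=r$, the tube is forward-invariant, so a trajectory started at distance $<r$ never leaves the region where the estimate holds; assumption (i), combined with the fact that $\tfrac{d}{dt}S_\gamma=-\lVert\nabla S\rVert^2\le0$, gives the complementary bound $\tfrac{\gamma}{2}\lVert Y\rVert^2\le S_\gamma(A(0))$ and ensures the flow exists for all $t\ge0$. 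The main obstacle I anticipate is the uniform control of the Taylor remainder: the constant $C$ must not depend on the foot point $X$ as $X$ ranges over the non-compact subspace $\mathcal E$, which is precisely what the uniform third-derivative bound in (iii) provides; everything else is bookkeeping with the commuting projections and the two Facts.
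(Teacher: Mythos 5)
Your proof is correct, and its skeleton is the same as the paper's: decompose $A=X+Y$ with $X=\Pi_{\mathcal{E}}A$, Taylor expand $\nabla R^{\mathrm{aug}}$ around the foot point $X$, kill the zeroth-order term with Fact A, identify the first-order term as normal to $\mathcal{E}$ via Fact B, lower-bound the quadratic form with assumption (ii), and close with Gr\"onwall. Where you genuinely diverge is the treatment of the cubic remainder $C\lVert Y\rVert^3$. The paper controls it through an a priori bound obtained from assumption (i) and monotonicity of $S_\gamma$ along the flow, namely $\lVert Y\rVert^2\leq \alpha\coloneqq \frac{2}{\gamma}R^{\mathrm{aug}}(A^0)+\lVert Y^0\rVert^2$, which yields the decay rate $2(C\sqrt{\alpha}-\sigma-\gamma)$ and requires $\gamma>C\sqrt{\alpha}-\sigma$. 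You instead run a bootstrap: with $\sigma+\gamma>0$ and $\gamma\geq 2Cr-\sigma$ the tube $\{\lVert Y\rVert\leq r\}$ is forward-invariant (the derivative of $\lVert Y\rVert^2$ is strictly negative on its boundary), and inside it the rate is $-\frac{1}{2}(\sigma+\gamma)$. Your variant buys something real: your $\gamma$ depends only on $r$, $C$ and $\sigma$, so a single $\gamma$ serves \emph{every} starting point at distance less than $r$, which is exactly how the theorem is phrased. In the paper's argument, $\alpha$ also contains $R^{\mathrm{aug}}(A^0)$ --- the closing remark there that ``$\alpha$ only depends on $\lVert Y^0\rVert$'' overlooks this --- so the admissible $\gamma$ depends on the starting point through the value of the risk at $A^0$, not only through its distance to $\mathcal{E}$, and one must send $\gamma\to\infty$ to suppress that term. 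You still use assumption (i), but only to guarantee global existence of the flow via $\frac{d}{dt}S_\gamma\leq 0$, a point the paper leaves implicit; your explicit handling of the self-adjointness step $\langle Y,\Pi_{\mathcal{L}}(R^{\mathrm{aug}})''(X)Y\rangle=\langle Y,(R^{\mathrm{aug}})''(X)Y\rangle$ (needed to invoke (ii)) is likewise a detail the paper glosses over.
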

\begin{proof}
    Let $X\in \mathcal{E}$ and $Y\in\mathrm{T}\mathcal{E}^{\perp}$ and write $A=X+Y$, so that $S(A)=R^{\mathrm{aug}}(A)+\frac{\gamma}{2}\lVert Y \rVert^2$. Due to (i), we then have
$$
\frac{\gamma}{2}\lVert Y \rVert^2\leq R^{\mathrm{aug}}(A) + \frac{\gamma}{2}\lVert Y \rVert^2\leq R^{\mathrm{aug}}(A^0) + \frac{\gamma}{2}\lVert Y^0 \rVert^2,
$$
where $A^0=X^0+Y^0$ is the starting point of the gradient flow $\Dot{A} = -\nabla S(A)$. Thus, we get the a priori bound
$$
\lVert Y \rVert^2\leq \frac{2}{\gamma}R^{\mathrm{aug}}(A^0)+\lVert Y^0 \rVert^2=:\alpha
$$
for every $A$. Now, a Taylor expansion of $\nabla R^{\mathrm{aug}}(A)$ around $X$ yields
\begin{align*}
\nabla R^{\mathrm{aug}}(A)=\Pi_{\mathcal{L}}\nabla R^{\mathrm{aug}}(X)+\Pi_{\mathcal{L}}(R^{\mathrm{aug}})''(X)Y&+O(\lVert Y \rVert^2).
\end{align*}
Note that the big-O is independent of $X$ due to assumption (iii). Due to Facts A and B above, we see that $\Pi_{\mathcal{L}}\nabla R^{\mathrm{aug}}(X)= \Pi_{\mathcal{E}}\nabla R(A) \in T\mathcal{E}$ and $\Pi_{\mathcal{L}}(R^{\mathrm{aug}})''(X)Y\in T\mathcal{E}^\perp$. Hence, as noted in to \cite[Prop. 3.14]{nordenfors2024optimizationdynamicsequivariantaugmented}, the dynamics $\dot{A}=-\nabla S(A)$ decouple in the following sense:
\begin{align}
    \Dot{X}&=-\Pi_{\mathcal{E}}\nabla R(A)+ O(\lVert Y \rVert^2), \nonumber \\
    \Dot{Y}&=-\Pi_{\mathcal{L}}(R^{\mathrm{aug}})''(X)Y-\gamma Y+O(\lVert Y \rVert^2).\label{eq:ydynamics}
\end{align}
Note that the term $-\gamma Y$ is due to the regularization. 
Differentiating the squared norm of $Y$, \eqref{eq:ydynamics} yields
\begin{align}\label{eq:grönwallcondition}
    &\frac{d}{dt}\Big(\frac{1}{2}\lVert Y\rVert^2\Big)=\langle \Dot{Y},Y \rangle\nonumber\\&=-\langle \Pi_{\mathcal{L}}(R^{\mathrm{aug}})''(X)Y,Y \rangle-\langle \gamma Y,Y \rangle+\langle O(\lVert Y\rVert^2),Y \rangle\nonumber\\
    &\leq -\sigma\lVert Y \rVert^2-\gamma \lVert Y\rVert^2+C\sqrt{\alpha}\lVert Y \rVert^2\nonumber\\&=(C\sqrt{\alpha}-\sigma-\gamma)\lVert Y \rVert^2,
\end{align}
where we used the a priori bound on $\Vert{Y}\Vert$ and assumption (ii).
From \eqref{eq:grönwallcondition} it follows by Grönwall's inequality that
\begin{align*}
\lVert Y\rVert^2 \leq \lVert Y^0\rVert^2 \mathrm{exp}\Big(2(C\sqrt{\alpha}-\sigma-\gamma)t \Big).
\end{align*}
Hence, if $\gamma > C\sqrt{\alpha}-\sigma$, $\lVert Y\rVert^2$ decays exponentially to 0. Since $\alpha$ only depends on $\Vert{Y^0}\Vert$, we see that we can achieve that by choosing $\gamma$ in dependence of $r$, which is what was to be proved.
\end{proof}
\begin{remark}
    From an intuitive point of view, heavily regularizing the non-equivariant part $\Pi_{\mathcal{E}^\perp} A$ of the linear layers will also force the dynamics down to $\mathcal{E}$ in the non-augmented case. This is to some degree true, but the augmented case is different from the non-augmented one in two crucial ways. To see this, let us, in the same way as in  the proof of Theorem \ref{thm:attractor} Taylor expand the gradient of the nominal risk $\nabla R(A)$ around an $X\in \mathcal{E}$ and write down the $Y$ dynamics as
    \begin{align*}
        \Dot{Y} = -\Pi_\mathcal{L}\nabla R(X) - \Pi_\mathcal{L}R''(X)Y&-\gamma Y+O(\lVert Y\rVert^2),
    \end{align*}
    which yields
    \begin{align*}
        \frac{d}{dt}\Big(\frac{1}{2}\lVert Y \rVert^2\Big) &=-\langle\Pi_\mathcal{L}\nabla R(X),Y\rangle -\langle \Pi_{\mathcal{L}}R''(X)Y,Y \rangle\\&\quad\,-\langle \gamma Y,Y \rangle+\langle O(\lVert Y\rVert^2),Y \rangle.
    \end{align*}
    Now, in the non-augmented case we are not guaranteed that $\langle{\mathcal{L}} \nabla R(X),Y\rangle=0$, as opposed to the augmented case where Fact A ensures that the inner product vanishes.
    
    Assuming $\langle \Pi_{\mathcal{L}}\nabla R(X),Y\rangle=0$, an argument similar to that in the proof of Theorem \ref{thm:attractor} would yield a similar result. However, in this case, the parameter $\sigma$ would be different and would be a smaller number (which would lead to a higher required value of $\gamma$). To see this, let us use the following formula, which follows by \cite[Lemma 3.11]{nordenfors2024optimizationdynamicsequivariantaugmented} 
    \begin{align*}
    \langle(R^{\mathrm{aug}})''(A)Y,Y\rangle&=\int_G\langle R''(A)\rho(g)Y, \rho(g)Y\rangle d\mu(g) 
    \end{align*}
    Now, if $Y\in T\mathcal{E}^\perp$, it is not hard to see that $\rho(g)Y\in T\mathcal{E}^\perp$ also for all $g$. Hence, if $\langle R''(A)Y,Y\rangle \geq \overline{\sigma}\Vert Y \Vert^2$ for all $Y\in T\mathcal{E}^\perp, A\in \mathcal{E}$, the above integral is bounded below by $\overline{\sigma}\Vert Y \Vert^2 $. In this sense, $\sigma \geq \overline{\sigma}$, and therefore we can potentially get away with a lower value for $\gamma$ when regularizing the augmented risk compared to the nominal risk.
\end{remark}
\begin{remark}
    We can also say something about what happens if we begin training in a neighborhood in $\mathcal{L}$ of a strict local minimum $X^*$ for the equivariant mode training. Let us write $A=(X,Y)$, where $X\in \mathcal{E}$ and $Y\in \mathrm{T}\mathcal{E}^\perp$. Consider the dynamics
    \begin{align*}
        \Dot{X} &= -\Pi_{\mathcal{E}}\nabla S(X,Y)=-\Pi_{\mathcal{E}}\Pi_{\mathcal{L}}\nabla R^{\mathrm{aug}}(X,Y),\\
        \Dot{Y} &= -\Pi_{\mathcal{E^\perp}}\nabla S(X,Y) = -\Pi_{\mathcal{E}^{\perp}}\Pi_{\mathcal{L}}\nabla R^{\mathrm{aug}}(X,Y) - \gamma Y.
    \end{align*}
    Let $\Delta X=X-X^*$. Taylor expanding around $(X^*,0)$ gives
    \begin{align*}
    \Dot{\begin{bmatrix}
        \Delta X\\
        Y
    \end{bmatrix}} &= -\Big((R^{\mathrm{aug}})''(X^*,0)+\begin{bmatrix}
        0&0\\
        0&\gamma I
    \end{bmatrix}\Big)\begin{bmatrix}
        \Delta X\\
        Y
    \end{bmatrix}\\&\quad\, + O(\lVert \Delta X \rVert^2+\lVert Y \rVert^2)
\end{align*}
By choosing $\gamma$ large enough, we can ensure that the second derivative is positive definite. Thus, there is a neighborhood $U$ of the point $(X^*,0)$ such that starting the training at a point $(X^0,Y^0)\in U$, we guarantee that the regularized augmented training will converge to $(X^*,0)$.
\end{remark}

\section{Experimenting with Different Values of $\gamma$}
We perform a numerical experiment to confirm the practical relevance of our result. In the experiment we use SGD with random augmentations, which is different from what we do in our theoretical development, where we assume training with perfect augmentation and gradient flow, but also closer to what is done in practice. Since our results here in particular are about stability, they should also have an impact in the stochastic setting.
\paragraph{Experiment description} We consider CNN:s with two convolutional layers of 16 channels each, with filters of size $3\times3$, followed by a fully connected layer (both without biases). We use $\tanh$ activation functions and a cross-entropy loss. Since the activation function acts pixel-wise, they are equivariant. The networks are trained in three configurations: (i) in equivariant mode, (ii) on (randomly) augmented data (iii) on non-augmented data. The group action we consider is the discrete group of 90 degree rotations, and aim to build invariant networks. As noted above, these networks obey the compatibility condition. Since the group acts trivially on the output space,  the invariance of the loss function $\ell$ is trivial.

We initialize the equivariant networks with Gaussian weights. We then copy those weights to the non-equivariant networks, and subsequently perturb them with Gaussian noise (independently from each other). We train the networks on augmented and non-augmented data, respectively, using SGD on MNIST for 10 epochs with a learning rate of 1e-3 and a batch size of 10. Both non-equivariant networks are regularized, using four different values of the regularization constant $\gamma$, 1e-4,1e-2,1e0 and 1e2. We record after each gradient step the distance of the non-equivariant models to $\mathcal{E}$. 
 The experiment is repeated 30 times for each value of $\gamma$. All code used in the experiment is made available at \url{https://github.com/usinedepain/eq_aug_reg_release}

\paragraph{Results} In Figure \ref{fig:proj}, we plot the evolution of the distance to $\mathcal{E}$ along the training -- the opaque lines are the median values. The graphs look just as one expects -- for high values of $\gamma$, both non-equivariant models stay close to $\mathcal{E}$. However, already for $\gamma=\mathrm{1e0}$, the model trained on non-augmented data drifts considerably, while the augmented model stays close to $\mathcal{E}$. We also see that when the constant is chosen too low ($\gamma=\textrm{1e-4}$), the augmented model will also drift from $\mathcal{E}$.


\begin{figure}
\includegraphics[width=.24\textwidth]{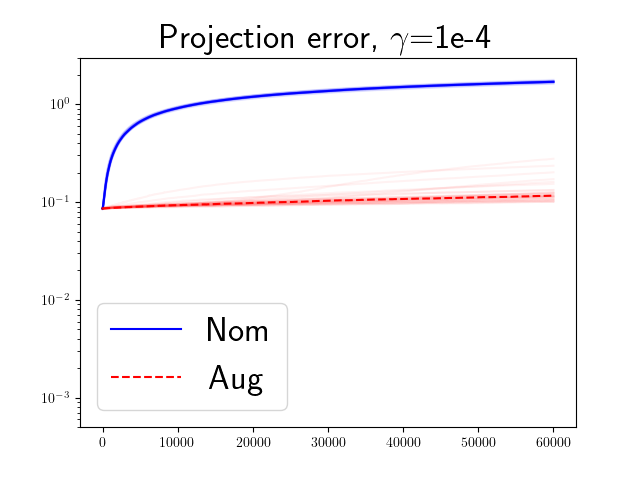}\includegraphics[width=.24\textwidth]{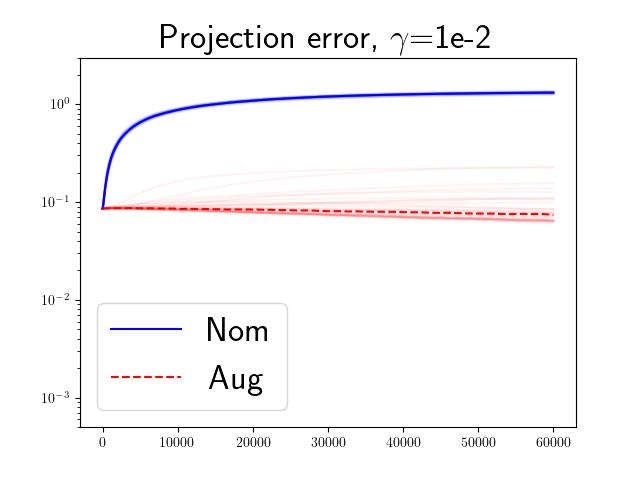}\\\includegraphics[width=.24\textwidth]{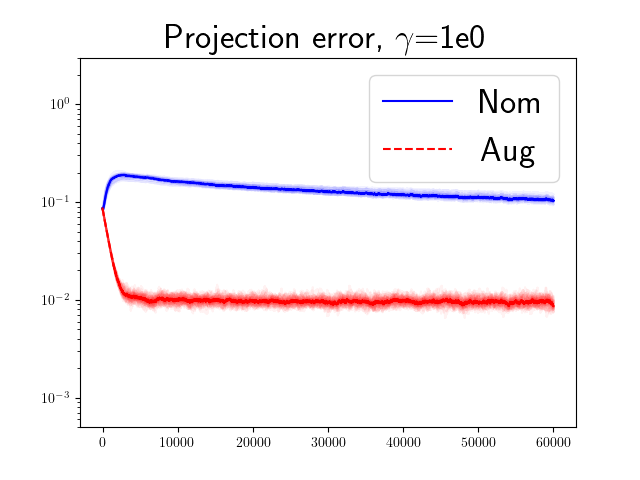}\includegraphics[width=.24\textwidth]{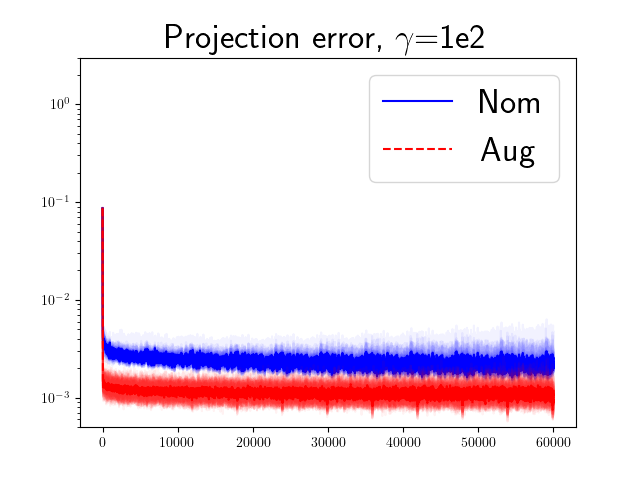}
\caption{Projection errors for the two non-equivariant models for different values of $\gamma$. Notice the logarithmic $y$-scale. Opaque lines are medians, and transparent lines are individual runs. Best viewed in color. \label{fig:proj}}
\end{figure}
\section{Conclusion}
We investigated the relationship between manifestly equivariant neural network architectures with data augmentation.  Using the 'softer' approach of data augmentation and a regularization term to punish distance to an equivariant subspace $\mathcal{E}$, we obtain equivariant models without having to restrict the layers of the network to be equivariant a priori. The theoretical results rely heavily on our previous paper \cite{nordenfors2024optimizationdynamicsequivariantaugmented}. We have also seen, with a small numerical experiment, that these results are born out in practice, even when using SGD with random augmentations.

For future work, it would be interesting to do larger experiments to see if these methods can give lower test error than manifest equivariance. In particular, it would be interesting to see whether a fine-tuning of $\gamma$ could make the networks avoid bad local minima on $\mathcal{E}$ while still being attracted to good ones.
\section{Acknowledgments}
This work was partially supported by the Wallenberg AI, Autonomous Systems and Software Program (WASP) funded by the Knut and Alice Wallenberg Foundation. 
\bibliography{biblio}
\bibliographystyle{IEEEtran}

\end{document}